\documentclass[11pt]{article}

\usepackage[final]{acl}

\usepackage{times}
\usepackage{latexsym}
\usepackage{amsmath,amsthm}
\usepackage{amsfonts}
\usepackage{enumitem}
\usepackage{array}
\usepackage{caption}
\usepackage{makecell}
\usepackage[utf8]{inputenc}
\usepackage{geometry}
\usepackage{booktabs}
\usepackage{multirow}
\usepackage{graphicx}
\usepackage[table,xcdraw]{xcolor}
\usepackage{amssymb}
\usepackage[T1]{fontenc}

\usepackage[utf8]{inputenc}

\usepackage{microtype}

\usepackage{inconsolata}

\usepackage{graphicx}

\usepackage{titlesec}
\titlespacing*{\section}{0pt}{0.1\baselineskip}{0.1\baselineskip}
\titlespacing*{\subsection}{0pt}{0.1\baselineskip}{0.1\baselineskip}
\titlespacing*{\subsubsection}{0pt}{0.1\baselineskip}{0.1\baselineskip}

\usepackage{tcolorbox}   
\definecolor{myblue}{RGB}{230, 245, 255}
\definecolor{mygray}{RGB}{240, 240, 240}

\theoremstyle{plain}
\newtheorem{theorem}{Theorem}[section]

\newcommand{\hl}{\textcolor{blue}}

\newcommand{\taghl}{\colorbox{gray!35}}

\title{Decompose, Look, and Reason: Reinforced Latent Reasoning for VLMs}

\author{
    \textbf{Mengdan Zhu}$^{\clubsuit}$,
    \textbf{Senhao Cheng}$^{\spadesuit}$,
    \textbf{Liang Zhao}$^{{\clubsuit}}$ \\
    $^{\clubsuit}$Emory University \quad
    $^{\spadesuit}$University of Michigan, Ann Arbor \\
    \texttt{\{mengdan.zhu, liang.zhao\}@emory.edu} \\
    \texttt{senhaoc@umich.edu}
}

\begin{document}
\maketitle
\begin{abstract}
Vision-Language Models often struggle with complex visual reasoning due to the visual information loss in textual CoT. Existing methods either add the cost of tool calls or rely on localized patch-based embeddings that are insufficient to extract semantics in multi-step reasoning. We propose \emph{"Decompose, Look, and Reason" (DLR)}, a reinforced latent reasoning framework that dynamically decomposes queries into textual premises, extracts premise-conditioned continuous visual latents, and deduces answers through grounded rationales. We introduce a three-stage training pipeline and propose a novel \emph{Spherical Gaussian Latent Policy}, to enable effective exploration in the latent space. Extensive experiments on vision-centric benchmarks show that DLR consistently outperforms strong baselines, including text-only, interleaved multimodal CoT, and latent reasoning methods, while providing superior stepwise interpretability.
\end{abstract}

\section{Introduction}
Vision-Language Models (VLMs) have made remarkable progress on visual question answering~\cite{goyal2017making,antol2015vqa}, vision-language understanding~\cite{li2023blip,zhu2025latentexplainer}, vision-language generation~\cite{zhang2021ernie,zhu2025cross}, and vision-language reasoning~\cite{wang2024exploring,yu2026latent}. The success of integrating Chain-of-Thought (CoT) into large language models (LLMs) for enhancing reasoning on complex tasks has inspired its extension to multimodal reasoning. Early approaches relied on text-only multimodal CoT (MCoT)~\cite{zhang2023multimodal}, which translates visual inputs into textual descriptions. However, this textual abstraction inevitably loses some important visual information. Therefore, subsequent work evolved toward interleaved MCoT, which explicitly incorporates localized visual signals like cropped patches or bounding boxes~\cite{shao2024visual,gao2025interleaved} into the reasoning steps. This line of work later developed into "thinking with images", where the model actively manipulates or edits images during reasoning. Such edits mainly include zooming in~\cite{shen2025zoomeye,zhang2025chain,su2026pixel}, drawing auxiliary lines and sketches~\cite{hu2024visual}, and highlighting boxing regions for understanding~\cite{fu2025refocus}. Despite being visually grounded, these approaches incur extra cost due to external tool calls or program-based image manipulations, and are inherently limited by the set of available external tools. Alternatively, latent space reasoning models can be more efficient, as they avoid external tool calling by directly projecting intermediate visual information into an internally continuous embedding space. Existing latent visual reasoning methods typically learn continuous representations by reconstructing query-relevant region-of-interest (ROI) visual embeddings~\cite{li2026latent} or the embeddings of auxiliary images~\cite{wang2025monet}. 

Patch-based methods such as Interleaved MCoT and "thinking with images" approaches, and prior latent reasoning methods have the following limitations: Such methods remain tied to explicit localized visual regions but fail to semantically isolate the desired elements. (1) An ROI or patch-based method may \emph{over-include} information by bundling together all visual content within a selected region, including irrelevant context that are not needed for the current reasoning step. (2) At the same time, it may \emph{under-include} when the required evidence is inherently non-local, such as a global layout, more abstract concepts, or a cross-patch relation. For example, Fig.~\ref{fig:qualitative}(b) requires global and cross-patch visual information to determine the dominant color. Moreover, prior latent visual reasoning methods often insert the latent only once. For instance, Fig.~\ref{fig:qualitative}(a) presents a complex logical reasoning problem, where each reasoning step requires attending to different regions and their logical relationship, which cannot be captured by a single ROI.

To address these limitations, we introduce \textbf{\underline{D}ecompose, \underline{L}ook, and \underline{R}eason: Reinforced Latent Reasoning (DLR)} framework for VLMs. Our framework mimics the human cognitive process of "Decompose $\rightarrow$ Look $\rightarrow$ Reason": (1) \textbf{Decompose the premise dynamically}: The model dynamically generates a textual premise or subquestion, determining \textit{what} specific details need to be verified in the image and \textit{when} to look for them. (2) \textbf{Look premise-conditioned latents}: A visual grounder attends to the image and conditioned on the hidden state of the textual premise, extracting continuous latent embeddings that capture \textit{where} to look. Unlike patch-based latent reasoning methods, these latent tokens enable more efficient representation of image-related visual thoughts, covering both localized visual information and non-local latent semantics. (3) \textbf{Reason the latent-grounded rationale}: Conditioned on the injected visual latents, the VLM generates a textual rationale to explain in detail and eventually deduce the final answer. 

To unleash the potential of this DLR framework, we propose a progressive three-stage training pipeline. In stage I: Pretraining, we establish a foundational cross-modal alignment and ensure that the continuous latents can accurately extract visual semantics corresponding to specific textual conditions. Next, in Stage II: Supervised Finetuning, we teach the VLM with the decomposition capability to internalize the structured DLR format. However, SFT strictly relies on teacher-forced log-likelihood, inherently bounding the visual grounder's capacity to deterministic feature extraction without active exploration. To break this bottleneck, Stage III employs reinforcement learning to unlock true latent exploration. By adapting GRPO with our proposed novel latent policy optimization SGLP, we enable the model to actively explore the continuous visual manifold. 


In summary, our main contributions are:
\begin{itemize}[leftmargin=*]
\item We propose "Decompose, Look, and Reason", a reinforced latent reasoning framework, which dynamically decomposes the query into premises requiring visual verification, while simultaneously extracts premise-conditioned visual latents. The two components are mutually reinforcing, enabling the framework to progressively improve both the VLM text policy and latent visual policy to find an optimal reasoning trajectory.
\item We introduce a progressive three-stage training pipeline with a novel latent policy optimization. The proposed Spherical Gaussian Latent Policy (SGLP) intrinsically aligns with the hyperspherical manifold of vision-language representations and enables direct latent exploration without magnitude collapse. This effectively bridges the gap in multimodal latent RL and breaks the deterministic limitations of SFT.
\item We conduct extensive experiments on multiple vision-language benchmarks spanning visual perception, mathematical reasoning, and vision-language understanding. DLR consistently outperforms strong baselines, including text-only, interleaved MCoT, and latent reasoning methods. Further ablations and case studies verify the contributions of each component, while also demonstrating that DLR yields more interpretable stepwise visual reasoning through premise-conditioned latents.
\end{itemize}

\section{Related Work}
\subsection{Multimodal Chain-of-Thought Reasoning}
Multimodal Chain-of-Thought (MCoT) reasoning extends textual CoT to settings where the input, intermediate reasoning process, or output may involve non-linguistic modalities such as images~\cite{wang2025multimodal}. Early MCoT methods generate textual intermediate rationales conditioned on visual inputs. Representative works ~\cite{zhang2023multimodal,lu2022learn} show that inserting natural-language reasoning steps substantially improves visual question answering and science reasoning. But their intermediate reasoning process remains text-centric. Subsequent work explores interleaved MCoT, where visual content is explicitly inserted into the reasoning chain. For instance, Visual CoT introduces intermediate supervision by annotating a key bounding box~\cite{shao2024visual} while ICoT constructs interleaved reasoning by selecting image regions according to the model's attention.~\cite{gao2025interleaved}. These methods improve visual grounding, but still rely on patch-based visual embeddings.

\subsection{Think with Images}
Recent work has evolved from interleaved MCoT reasoning to "thinking with images"~\cite{su2025thinking}. In this paradigm, the model actively edits, manipulates, or augments visual evidence during intermediate reasoning steps. Representative methods equip vision-language models with drawing lines, or boxes during reasoning~\cite{hu2024visual}, and perform image edits such as highlighting, boxing, and masking to enhance structured visual understanding~\cite{fu2025refocus}. More recent methods further generalize this paradigm by learning to invoke external vision tools adaptively during reasoning~\cite{su2026pixel,su2025openthinkimg}. However, they typically rely on external tools or executable programs to manipulate images. As a result, they often incur additional computational overhead and remain constrained by the coverage and capability of the predefined toolset~\cite{su2025thinking}. In contrast, our method explores directly internal latent visual reasoning rather than external tool calls.

\subsection{Latent Space Reasoning}

An alternative line of work seeks to perform visual reasoning directly in continuous latent space. Representative methods differ in how these latents are defined and supervised. Some studies reconstruct visual embeddings by query-relevant ROI ~\cite{li2026latent,yang2025machine} and auxiliary images~\cite{wang2025monet,zhu2024explaining}. However, the explicit ROI supervision limits the expressiveness and scalability of the visual latents. The intermediate reasoning process in these methods is typically interleaved by a single visual embedding. More recently, IVT-LR~\cite{chen2026reasoning} can perform interleaved
vision-text reasoning in latent space by combining latent textual
states with selected visual embeddings. In contrast, our method could provide multi-step visual verifications without ROI supervision. It first establishes textual decomposition to iteratively decide \emph{what} and \emph{when} to inspect, then ground the specific premise with visual latents, and produce a more accurate rationale based on the retrieved evidence.

\section{Reinforced Latent Reasoning}
\begin{figure*}[t]
\begin{center}
\includegraphics[width=0.95\textwidth]{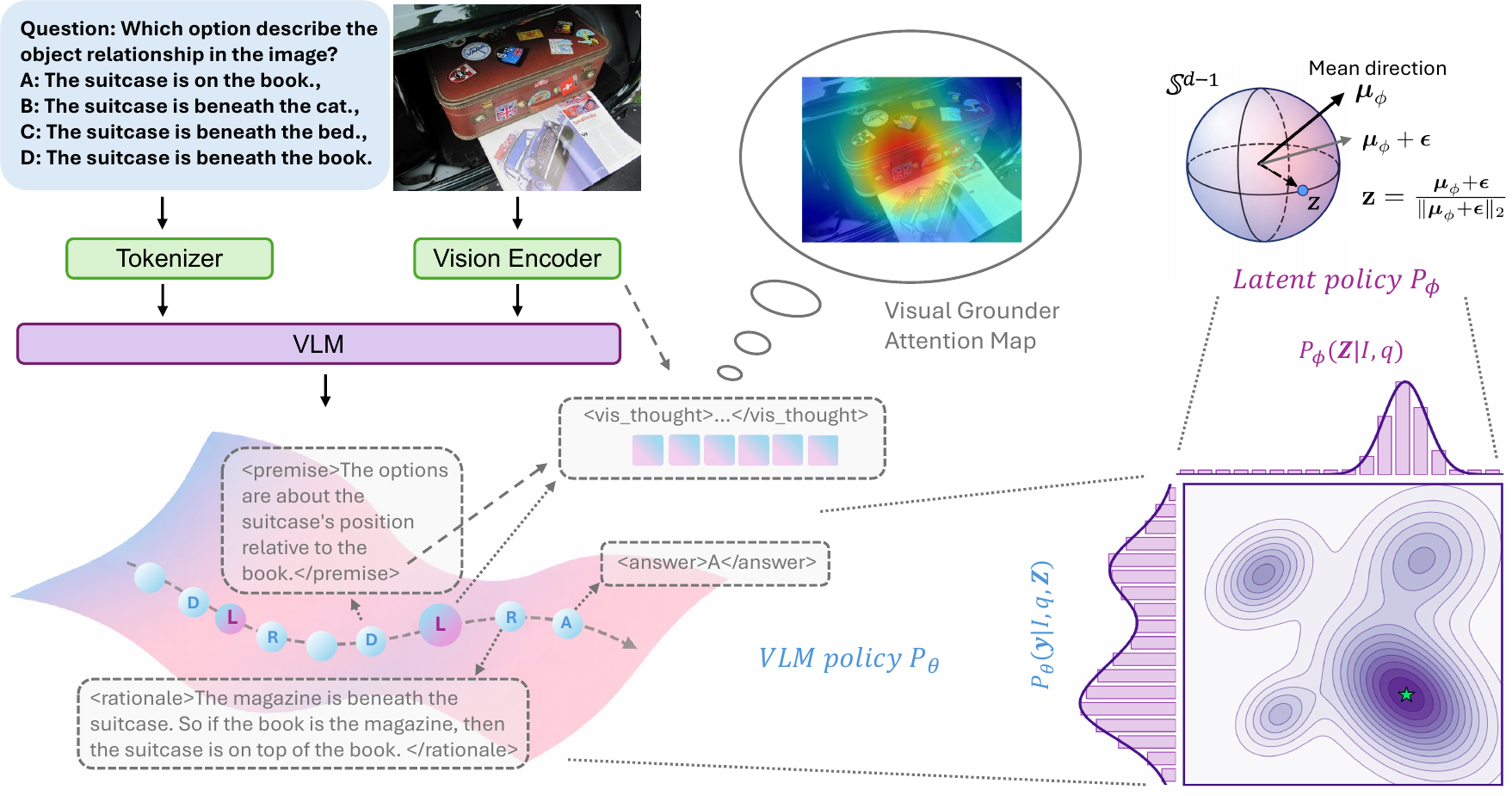}
\caption{Overview of the \emph{Decompose–Look–Reason} (DLR) framework and its reinforcement finetuning training objectives. The bottom-right contour illustrates the joint optimization landscape induced by the VLM text policy $p_{\theta}$ and the latent visual policy $p_{\phi}$.}
\label{fig:main}
\end{center}
\vspace{-6mm}
\end{figure*}
\subsection{Method Overview}
We formulate the multimodal reasoning task as a multi-step reasoning process, following a "Decompose, Look, and Reason" paradigm. Given an input image $I$ and a user question $q$, our framework dynamically interleaves the visual grounding subquestion or premise $p$ with latent visual evidence $\mathbf{z}$ and its corresponding rationale $r$. This process generates an iterative reasoning trajectory $\tau = \{(p^{(t)}, \mathbf{z}^{(t)}, r^{(t)})\}_{t=1}^N$ that ultimately deduces the final answer $a$.

The architecture consists of two primary learnable components parameterized by $\Theta = \{\theta, \phi\}$: a VLM policy $P_\theta$ governing discrete text generation, and a latent visual grounder policy $P_\phi$ responsible for continuous latent embeddings. The generation process unfolds in three steps:

\begin{itemize}[leftmargin=*, label=$\diamond$]
    \item \textbf{Decompose:} The VLM $P_\theta$ learns to parse the current reasoning trajectory to mark a textual premise $p$ that needs further visual verification, enclosing it within the special tokens \texttt{<premise>} and \texttt{</premise>}.
    \item \textbf{Look:} The latent visual grounder $P_\phi$ takes the image features and the hidden state of \texttt{</premise>}, $s_t$, to actively search for visual evidence with respect to $p$, yielding continuous latent embeddings $\mathbf{z} \in \mathbb{R}^{l \times d}$ inside the \texttt{<vis\_thought>} and \texttt{</vis\_thought>} tokens. 
    \item \textbf{Reason:} Conditioned on the injected visual evidence $\mathbf{z}$, the VLM $P_\theta$ generates a textual rationale $r$ enclosing with the \texttt{<rationale>} and \texttt{</rationale>} and deduces the final answer $a$. 
\end{itemize}
As shown in Fig.~\ref{fig:main}, our framework couples textual reasoning and latent visual grounding through a joint factorization of the reasoning trajectory:
\begin{equation}
     P_{\Theta}(\mathbf{y},\mathbf{z} | I,q) = P_{\theta}(\mathbf{y}| I,q,\mathbf{z} ) P_{\phi}(\mathbf{z} | I,q),
\end{equation}
which can be further expanded in Eq.~\ref{eq:factorization}. This formulation makes the two components jointly optimized: \emph{better textual decomposition guides more precise visual grounding, and richer visual latents ground stronger rationales.} As a result, jointly optimizing the two components allows them to mutually reinforce each other, making it more likely to discover an optimal reasoning trajectory.

\subsection{Three-stage Training Pipeline}
To effectively train this dynamic and interleaved latent reasoning framework, we propose a three-stage training pipeline. 

\subsubsection{Stage I: Pretraining}

At the first stage, we warm up the latent visual grounder to establish basic cross-modal alignment between the continuous visual space and the discrete textual representation space of the VLM. 

\noindent \textbf{Visual Grounder} We freeze the pretrained VLM backbone and optimize a lightweight visual grounder. The frozen visual encoder extracts image features:
    $V=\Phi_{\mathrm{vis}}(I) \in \mathbb{R}^{m \times d},$
where $m$ is the number of visual tokens after the vision encoder and $d$ is the hidden dimension.

On the language side, we feed the question $q$ into the frozen language model and take the hidden state of its last valid token as the textual condition:
    $h_q=\Phi_{\mathrm{txt}}(q) \in \mathbb{R}^d.$

Similarly, we encode the answer $a$ with the same frozen language model and use the hidden state of its last valid token as the target text embedding:
    $h_a=\Phi_{\mathrm{txt}}(a) \in \mathbb{R}^d.$
    
We introduce a set of $L$ learnable latent queries
    $\mathbf{z_0} \in \mathbb{R}^{L \times d},$
which serve as trainable slots for extracting question-relevant visual evidence. 

The visual grounder contains two cross-attention layers followed by a feed-forward network. First, the latent queries are conditioned on $h_q$. Next, the question-conditioned latent queries attend over image features $V$ to extract relevant evidence. Finally, it passes a feed-forward network to obain the latent visual representation: $\mathbf{z} \in \mathbb{R}^{L \times d}$.

\noindent \textbf{Contrastive Alignment} To ensure the latent embeddings $\mathbf{z}$ capture the precise visual semantics necessary to deduce the answer, we employ an InfoNCE contrastive loss. We mean-pool the latent $\bar{\mathbf{z}} = \frac{1}{L} \sum_{k=1}^L \mathbf{z}_k$ and $L_2$-normalize it alongside the target answer embedding:
\begin{equation}
    \hat{\mathbf{z}}=\frac{\bar{\mathbf{z}}}{\|\bar{\mathbf{z}}\|_2},\quad\hat{\mathbf{h}}_a=\frac{\mathbf{h}_a}{\|\mathbf{h}_a\|_2}.
\end{equation}
The bidirectional contrastive loss guides the pooled visual evidence to align with the correct answer's semantic embedding in the latent space:
\begin{equation*}
    \mathcal{L}_{v \to t}=-\frac{1}{N}\sum_{i=1}^N\log\frac{\exp(\hat{\mathbf{z}}_i^\top\hat{\mathbf{h}}_{a,i}/\tau)}{\sum_{j=1}^N\exp(\hat{\mathbf{z}}_i^\top\hat{\mathbf{h}}_{a,j}/\tau)},
\end{equation*}
\begin{equation*}
     \mathcal{L}_{t \to v}=-\frac{1}{N}\sum_{i=1}^N\log\frac{\exp(\hat{\mathbf{h}}_{a,i}^\top\hat{\mathbf{z}}_i/\tau)}{\sum_{j=1}^N\exp(\hat{\mathbf{h}}_{a,j}^\top\hat{\mathbf{z}}_i/\tau)},
\end{equation*}
\begin{equation}
    \mathcal{L}_{pretrain}=\frac{1}{2}(\mathcal{L}_{v \to t}+\mathcal{L}_{t \to v}).
\end{equation}
\subsubsection{Stage II: Supervised Finetuning (SFT)}
In the second stage, we teach the model to follow our "Decompose, Look, and Reason" paradigm through supervised finetuning. We construct an annotated visual reasoning SFT dataset based on Vision-R1-cold dataset~\cite{huang2025vision} using an LLM (e.g., GPT-5-mini) with the prompt in Appendix~\ref{app:sft_prompt} to add our DLR special tokens. During SFT, both the VLM $\pi_\theta$ and the visual grounder $\pi_\phi$ are jointly optimized. The VLM learns to break down the reasoning trajectory into a sequence of image related sub-questions or premises $p$. Simultaneously, the visual grounder is trained as a deterministic feature extractor to output latent embeddings $\mathbf{z}^{(t)}$ that maximize the likelihood of the ground-truth rationales and the final answer.  The joint probability distribution for generating this trajectory can be factorized into the product of the VLM policy $\pi_\theta$ and the latent visual policy $\pi_\phi$:
\begin{equation}
\label{eq:factorization}
    \small
    \begin{aligned}
        P_{\Theta}&(\mathbf{y},\mathbf{z} | I,q) = \prod_{t=1}^N \Big[ \underbrace{P_\theta(p^{(t)} | I, q, \mathcal{H}_{<t})}_{\text{Decompose}} \cdot \underbrace{P_\phi(\mathbf{z}^{(t)} | I, s_t)}_{\text{Look}} \\
        & \cdot \underbrace{P_\theta(r^{(t)} | I, q, \mathcal{H}_{<t}, p^{(t)}, \mathbf{z}^{(t)})}_{\text{Reason}} \Big] \cdot P_\theta(a | I, q, \mathcal{H}_{\le N}),
    \end{aligned}
\end{equation}
where $\mathcal{H}_{<t}$ denotes the cumulative discrete context history up to step $t$. 

In practice, optimizing the continuous latent variables $\mathbf{z}^{(t)}$ with a discrete cross-entropy objective is infeasible. Therefore, we represent the latents with a sequence of $L$ discrete placeholder tokens (e.g., \texttt{<VIS0>} to \texttt{<VIS31>}) and implement a two-pass forward mechanism. Specifically, the VLM first extracts the hidden state $s_t$ at the \texttt{</premise>} token, which the grounder $\pi_\phi$ then uses to generate the continuous visual latents $\mathbf{z}^{(t)}$. Next, we replace the placeholder tokens with $\mathbf{z}^{(t)}$ and continue generating the subsequent rationale $r^{(t)}$ and answer $a$. When computing the training objective, we mask the labels of these placeholder tokens. The optimization objective is thus formulated as the autoregressive Cross-Entropy (CE) loss over the text tokens:
\begin{equation}
    \small
    \begin{aligned}
        \mathcal{L}&_{SFT}(\Theta) =  - \mathbb{E}_{\mathbf{y} \sim \mathcal{D}} \Bigg[\sum_{t=1}^N \Big( \log \pi_\theta(p^{(t)} | I, q, \mathcal{H}_{<t}) + \\ & \log \pi_\theta(r^{(t)} | I, q, \mathcal{H}_{<t}, p^{(t)}, \mathbf{z}^{(t)}) \Big) 
        + \log \pi_\theta(a | I, q, \mathcal{H}_{\le N}) \Bigg].
    \end{aligned}
\end{equation}
By minimizing this CE loss, the visual grounder $\pi_\phi$ is implicitly optimized via gradients backpropagated from the subsequent rationales $r^{(t)}$ and the final answer $a$. While SFT successfully teaches the model to internalize the structured DLR format, it relies strictly on teacher-forced log-likelihood. This inherently bounds the visual grounder's capacity to deterministic feature extraction without active exploration, which motivates our latent policy in Stage III.

\subsubsection{Stage III:Reinforcement Finetuning for Continuous Latent Embeddings}

\noindent \textbf{Limitations of GRPO and Geometric Inductive Bias on Latent Embeddings.}
Prior work on latent visual reasoning~\cite{li2026latent,yang2025machine} perform GRPO~\cite{shao2024deepseekmath} following SFT. But standard GRPO can only be applied to discrete text tokens and can not optimize continuous latent embeddings due to the lack of a defined token distribution. Thus, previous work treats visual latents as deterministic variables, confining the RL exploration to only the LLM's text generation. The visual latents are frozen or updated indirectly via gradients backpropagated from the textual RL loss. This lack of a visual latent direct exploration limits the latent representation learning, easily falling into the suboptimal bounded by its SFT.

To address this limitation, we propose a stochastic latent policy for the visual grounder. Crucially, this policy should be aligned with the geometry of the vision-language feature space: it is inherently constrained to a hyperspherical manifold, where semantic information (e.g., cosine similarity in CLIP) is encoded primarily in the \emph{direction} rather than the \emph{magnitude} of feature vectors. Formally, let the semantic manifold be the unit hypersphere $\mathbb{S}^{d-1} = \{ \mathbf{x} \in \mathbb{R}^d : \|\mathbf{x}\|_2 = 1 \}$. A standard Gaussian distribution $\mathcal{N}(\boldsymbol{\mu}, \sigma^2 \mathbf{I})$ defined in Euclidean space $\mathbb{R}^d$ is geometrically mismatched for this manifold, as it conflates magnitude variance with semantic variance~\cite{wang2025monet}. To resolve this, we propose a Spherical Gaussian Latent Policy (SGLP), which intrinsically aligns with the geometric inductive bias of the vision-language feature space. Instead of applying unconstrained Euclidean noise, our visual grounder predicts a $L_2$-normalized mean direction $\boldsymbol{\mu}_\phi \in \mathbb{S}^{d-1}$. We then inject isotropic exploration noise and explicitly re-project the sampled vector back onto the unit hypersphere:
\begin{equation}
\mathbf{z} = \frac{\boldsymbol{\mu}_\phi + \boldsymbol{\epsilon}}{\| \boldsymbol{\mu}_\phi + \boldsymbol{\epsilon} \|_2}, \quad \text{where } \boldsymbol{\epsilon} \sim \mathcal{N}(\mathbf{0}, \sigma^2 \mathbf{I}).
\end{equation}
This formulation ensures that the continuous exploration operates on the angular space, since its norm is fixed as $\|\mathbf{z}\|_2=1$. By mathematically decoupling the semantic direction from the vector magnitude, our approach eliminates the risk of magnitude collapse. 

\noindent \textbf{Latent Policy Optimization.}
We adapt the Dr. GRPO~\cite{liu2025understanding} for our joint policy optimization. For each question $q$, we sample a group of $G$ trajectories $\{\tau_1, \dots, \tau_G\}$ from the current joint policy $\pi_{\Theta_{old}}$, where $\Theta = \{\theta, \phi\}$. The advantage $A_i$ for the $i$-th trajectory is estimated by subtracting the group mean:
\begin{equation}
    A_i = r^{(i)} - \mu(\{r^{(j)}\}_{j=1}^G).
\end{equation}
The total objective $\mathcal{J}(\Theta)$ is the sum of the text policy objective and the latent policy objective:
\begin{equation}
    \mathcal{J}(\Theta) = \mathcal{J}_{text}(\theta) + \mathcal{J}_{latent}(\phi).
\end{equation}
While $\mathcal{J}_{text}(\theta)$ follows Dr.~GRPO formulation for discrete text tokens, we derive $\mathcal{J}_{latent}(\phi)$ for our latent policy. For a latent $\mathbf{z}_{old}$ generated by the old policy mean $\boldsymbol{\mu}_{\phi_{old}}$, the optimization objective is:
\begin{equation}
\small
\mathcal{J}_{latent}(\phi) = \mathbb{E} \left[ \frac{1}{G} \sum_{i=1}^G \min \left( \rho_i A_i, \text{clip}(\rho_i, 1-\epsilon, 1+\epsilon) A_i \right) \right],
\end{equation}
where the importance sampling ratio $\rho_i$ is computed using the spherical Gaussian probability densities:
\begin{equation}
\small
\begin{aligned}
\rho_i &= \frac{\pi_\phi(\mathbf{z}_{old}^{(i)})}{\pi_{\phi_{old}}(\mathbf{z}_{old}^{(i)})}  \\
&= \exp\left( \frac{\|\mathbf{z}_{old}^{(i)} - \boldsymbol{\mu}_{\phi_{old}}^{(i)}\|_2^2 - \|\mathbf{z}_{old}^{(i)} - \boldsymbol{\mu}_{\phi}^{(i)}\|_2^2}{2\sigma^2} \right).
\end{aligned}
\end{equation}
\begin{theorem}[Latent Policy Effect]
By maximizing $\mathcal{J}_{latent}(\phi)$, the expected update of the visual grounder increases the alignment between mean direction $\boldsymbol{\mu}_\phi$ and latent samples $\mathbf{z}_{old}$ that receive high advantage. Equivalently, the optimization encourages the latent policy to move toward semantically task-relevant directions on the hypersphere.
\end{theorem}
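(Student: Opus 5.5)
The plan is to compute the gradient of $\mathcal{J}_{latent}(\phi)$ with respect to the mean direction $\boldsymbol{\mu}_\phi$ at the on-policy point $\phi=\phi_{old}$, where $\rho_i=1$ and the clip is inactive. The expected first-order update is then a sum over samples of advantage-weighted score functions. Writing $\mathbf{z}_i:=\mathbf{z}_{old}^{(i)}$ and $\boldsymbol{\mu}_i:=\boldsymbol{\mu}_\phi^{(i)}$, the ratio formula above gives $\log\rho_i=(\|\mathbf{z}_i-\boldsymbol{\mu}_{\phi_{old}}^{(i)}\|^2-\|\mathbf{z}_i-\boldsymbol{\mu}_i\|^2)/(2\sigma^2)$. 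Since both $\mathbf{z}_i$ and $\boldsymbol{\mu}_i$ lie on $\mathbb{S}^{d-1}$, we have $\|\mathbf{z}_i-\boldsymbol{\mu}_i\|^2=2-2\,\boldsymbol{\mu}_i^\top\mathbf{z}_i$. Hence $\log\rho_i=(\boldsymbol{\mu}_i^\top\mathbf{z}_i-\mathrm{const})/\sigma^2$, so the ratio is monotone in the cosine alignment $\boldsymbol{\mu}_i^\top\mathbf{z}_i$. This identity is the key reduction.

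Next I differentiate. At $\rho_i=1$ the unclipped branch is active (for either sign of $A_i$ the min selects $\rho_iA_i$ in a neighborhood, up to the one-sided kink, which I will handle by taking the right derivative or noting that the gradient of the min equals that of $\rho_iA_i$ when $\rho_i\in(1-\epsilon,1+\epsilon)$). This gives
\[
\nabla_\phi\mathcal{J}_{latent}\big|_{\phi_{old}}=\mathbb{E}\Big[\frac{1}{G\sigma^2}\sum_{i=1}^G A_i\,\big(\partial_\phi\boldsymbol{\mu}_i\big)^{\top}\mathbf{z}_i\Big]=\mathbb{E}\Big[\frac{1}{G\sigma^2}\sum_{i=1}^G A_i\,\nabla_\phi\big(\boldsymbol{\mu}_i^\top\mathbf{z}_i\big)\Big].
\]
A gradient-ascent step $\Delta\phi=\eta\nabla_\phi\mathcal{J}$ then changes each alignment term to first order by $\Delta(\boldsymbol{\mu}_i^\top\mathbf{z}_i)\approx\eta\,\nabla_\phi(\boldsymbol{\mu}_i^\top\mathbf{z}_i)^\top\nabla_\phi\mathcal{J}$. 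To state the conclusion cleanly in $\boldsymbol{\mu}$-space, I will treat $\boldsymbol{\mu}$ directly as the parameter, which is the idealized case with Jacobian equal to the identity. The induced step is then $\Delta\boldsymbol{\mu}_i\propto A_i(\mathbf{I}-\boldsymbol{\mu}_i\boldsymbol{\mu}_i^\top)\mathbf{z}_i$, where the projection comes from the normalization of $\boldsymbol{\mu}_\phi$. This step has positive inner product with $\mathbf{z}_i$ exactly when $A_i>0$, because $\mathbf{z}_i^\top(\mathbf{I}-\boldsymbol{\mu}\boldsymbol{\mu}^\top)\mathbf{z}_i=1-(\boldsymbol{\mu}^\top\mathbf{z}_i)^2\ge0$. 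So the mean rotates along the geodesic toward high-advantage samples and away from negative-advantage ones. Since the advantages have group mean zero, the aggregate update direction is $\sum_iA_i\mathbf{z}_i$, i.e.\ the advantage-weighted resultant of the samples. This is the formal sense of ``moving toward task-relevant directions.''

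For the expectation claim, I will take the expectation over $\boldsymbol{\epsilon}$ and note that $\mathbb{E}[A\,\mathbf{z}]$ is the covariance between reward and sample direction. Its projection onto the tangent space at $\boldsymbol{\mu}$ is therefore the expected ascent direction, which completes the second (``equivalently'') sentence.

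The main obstacle is that the stated density is only a surrogate. The true pushforward density of the normalized Gaussian on the sphere (a projected normal) is not $\exp(-\|\mathbf{z}-\boldsymbol{\mu}\|^2/2\sigma^2)$. I would therefore state the result relative to the paper's ratio definition, remarking that it coincides with a von Mises–Fisher policy with concentration $1/\sigma^2$, whose normalizer does not depend on $\boldsymbol{\mu}$. A second subtlety is a general network parameterization, where the alignment increase holds only in aggregate (through the Gram matrix of Jacobians, which is positive semidefinite) rather than per sample. I would phrase the theorem's conclusion as an increase of $\sum_iA_i\boldsymbol{\mu}_i^\top\mathbf{z}_i$ to first order, which follows since that quantity's change is $\eta\|\nabla_\phi\mathcal{J}\|^2\ge0$ up to the positive factor $G\sigma^2$.
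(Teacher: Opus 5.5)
Your proposal is correct and follows essentially the paper's argument in Appendix~\ref{app:gradient}: you use the unit norms to reduce the Gaussian log-ratio to the cosine term $\boldsymbol{\mu}_\phi^\top\mathbf{z}/\sigma^2$, then read off the advantage-weighted score $\frac{A}{\sigma^2}\mathbf{z}^\top\nabla_\phi\boldsymbol{\mu}_\phi$, whose sign pulls $\boldsymbol{\mu}_\phi$ toward positive-advantage samples and away from negative ones. Your additions make precise what the paper leaves as ``$\approx$'' and informal takeaways: differentiating the clipped surrogate at $\rho_i=1$ instead of citing the policy gradient theorem, the tangent projection $(\mathbf{I}-\boldsymbol{\mu}\boldsymbol{\mu}^\top)$, the aggregate first-order increase of $\sum_i A_i\boldsymbol{\mu}_i^\top\mathbf{z}_i$ for a general network, and the observation that the stated ratio is exactly a von Mises--Fisher ratio rather than the projected-normal density.
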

\begin{proof} 
The formal proof is in Appendix~\ref{app:gradient}. 
\end{proof}

\noindent \textbf{Reward Design.} We design a dense reward function $r$ to guide the exploration. The total reward for the $i$-th trajectory is defined as:
\begin{equation}
    r^{(i)} = R_{outcome}^{(i)} + \beta \cdot \mathbb{I}\left(R_{outcome}^{(i)} > 0\right) \cdot R_{focus}^{(i)}.
\end{equation}

\noindent \textbf{(1) Outcome Reward ($R_{outcome}$)}: This is a binary sparse reward indicating the correctness of the model-generated answer $\hat{a}$ against the ground-truth answer $a$.
\begin{equation}
R_{outcome}^{(i)} = \begin{cases} 
1 & \text{if } \text{ExactMatch}(\hat{a}^{(i)}, a^{(i)}) \\
0 & \text{otherwise}
\end{cases}
\end{equation}
\noindent \textbf{(2) Visual Grounder's Focus Reward ($R_{focus}$)}: To encourage visual grounding, we align the visual grounder's attention map $\mathcal{A}_{latent}$ with an oracle attention map $\mathcal{A}_{oracle}$ derived from a frozen, strong vision-language model (e.g., SigLIP-SO400M) conditioned on the premise $p$. We define the focus reward as the exponential negative Kullback-Leibler (KL) divergence:
\begin{equation}
    R_{focus}^{(i)} = \exp\left( -\lambda \cdot D_{KL}(\mathcal{A}_{oracle}^{(i)} \| \mathcal{A}_{latent}^{(i)}) \right).
\end{equation}
The indicator function $\mathbb{I}(\cdot)$ in the total reward acts as a filter: we postulate that visual alignment is only meaningful when it contributes to a correct inference. If the model answers incorrectly, the focus reward is suppressed to zero to avoid reinforcing "hallucinated" attention patterns that do not yield factual correctness.


\section{Experiments}

\subsection{Experiment Setup}
\noindent \textbf{Evaluation Benchmarks}
To assess our DLR framework, we conduct comprehensive evaluations across four vision-centric benchmarks that collectively measure diverse capabilities in visual reasoning and understanding. For visual detail understanding, we evaluate on V* Bench~\cite{wu2024v}, which assesses VLM's capability to perform fine-grained attribute recognition and relative spatial reasoning. For mathematical visual reasoning, we evaluate on MathVista~\cite{lu2024mathvista}, a benchmark designed to assess mathematical reasoning in visual contexts. For broad multidisciplinary multimodal reasoning, we evaluate on MMMU-Pro~\cite{yue2025mmmu}. Finally, for general multimodal capability, we evaluate on MMStar~\cite{chen2024we}, a vision-indispensable benchmark that covers six core capabilities and eighteen fine-grained axes. We report the Pass@1 accuracy using greedy decoding. We set a maximum length of 40960.

\noindent \textbf{Baselines}
We compare DLR against several representative vision-language reasoning baselines. These include our model backbone Qwen3-VL-8B-Thinking~\cite{bai2025qwen3vltechnicalreport} which is a strong text-only baseline; ICoT~\cite{gao2025interleaved}, an interleaved multimodal CoT baseline that constructs paired visual-textual intermediate rationales by selecting image patches based on attention maps; PixelReasoner~\cite{su2026pixel}, a "thinking with images" approach that performs intermediate reasoning through pixel-space visual operations; and LVR~\cite{li2026latent}, a latent visual reasoning method that conducts reasoning in continuous visual embedding space. All open-source baselines are adapted on top of the same Qwen3-VL-8B-Thinking backbone as DLR to ensure fair comparison.

\noindent \textbf{Implementation Details}
We adopt Qwen3-VL-8B-Thinking as our base VLM. The visual grounder is initialized with $L=32$ learnable continuous queries. The focus reward parameter $\beta=0.1$ and $\lambda=1$. Experiments are conducted on 4 NVIDIA H200 GPUs. More details in Appendix~\ref{app:implementation}.

\subsection{Main Results}

\begin{table*}[htbp]
\centering
\resizebox{0.8\textwidth}{!}{
\begin{tabular}{l ccc c c c}
\toprule

\multirow{2}{*}{\textbf{Model}} & \multicolumn{3}{c}{\textbf{V*}} & \textbf{MathVista} & \textbf{MMMU-Pro} & \textbf{MMStar} \\
\cmidrule(lr){2-4} \cmidrule(lr){5-5} \cmidrule(lr){6-6} \cmidrule(lr){7-7}
& Overall & Attribute & Spatial & Testmini & Overall & Overall \\
\midrule

\multicolumn{7}{c}{\textit{Proprietary Model}} \\
\midrule
GPT-4o & 67.5* & 72.2* & 60.5* & 60.0* & 51.9* & - \\
\midrule

\multicolumn{7}{c}{\textit{Open-Source Model based on Qwen3-VL-8B-Thinking}} \\
\midrule
Qwen3-VL-8B-Thinking & 79.6 & 81.7 & 76.3 & 79.2 & 60.9 & 72.3 \\
ICoT& 79.6 & 81.7 & 76.3  &77.1 & 61.2 & 67.0 \\
LVR & 82.2 & 82.6 & 81.6 & 77.3 & 60.5 & 72.7 \\
PixelReasoner & 80.1 & 82.6 & 76.3& 74.2 & 56.1 & 68.7 \\
\midrule

\rowcolor[HTML]{E6F0FA} 
DLR (ours) & \textbf{83.8} & \textbf{84.3} & \textbf{82.9} & \textbf{82.7} & \textbf{63.5} & \textbf{76.2} \\
\rowcolor[HTML]{E6F0FA} 
 \textit{$\Delta$ over Backbone} & \textcolor[HTML]{00A000}{+4.2} & \textcolor[HTML]{00A000}{+2.6} & \textcolor[HTML]{00A000}{+6.6} & \textcolor[HTML]{00A000}{+3.5} & \textcolor[HTML]{00A000}{+2.6} & \textcolor[HTML]{00A000}{+3.9} \\

\bottomrule
\end{tabular}
}
\caption{Performance on real-world perception, math and reasoning benchmarks. The best results are highlighted in \textbf{bold}. Results marked with ``*'' are reported by other papers~\cite{wang2025vl,wang2025monet}.}
\label{tab:performance}
\vspace{-4mm}
\end{table*}

Tab.~\ref{tab:performance} presents the main results of DLR on four vision-centric benchmarks. Overall, DLR achieves the best performance among all baselines across all evaluated benchmarks, even surpassing the proprietary model GPT-4o with approximately 200B parameters. On V* Bench, DLR achieves 83.8 overall accuracy, outperforming the backbone by 4.2\%, LVR by 1.6\%, and PixelReasoner by 3.7\%. The gains are consistent across both attribute and spatial subcategories, confirming that DLR improves fine-grained visual detail understanding in both attribute recognition and relative position reasoning. We adapted baselines to the same Qwen3-VL-8B-Thinking backbone as ours for a fair comparison. ICoT inserts its interleaved visual evidence based on "\texttt{\textbackslash n}" tokens. On short reasoning tasks like V*, the model would fail to trigger newlines, causing ICoT to degenerate into the backbone performance. LVR's design is inherently compatible with Qwen3-VL-Thinking's \texttt{<think>} block: its latent visual reasoning tokens (\texttt{<|lvr\_start|>}, \texttt{<|lvr\_end|>}) are inserted inside the \texttt{<think>} block, and its SFT data preserves the CoT structure. As a result, LVR benefits from the stronger Qwen3-VL backbone. In contrast, PixelReasoner's tool calls are placed outside of the \texttt{<think>} block in Qwen3-VL's implementations by design, so PixelReasoner's SFT data cannot be wrapped into the thinking mode. When PixelReasoner's SFT is applied with this non-thinking data format for 1–2 epochs, the model tends to prioritize learning the tool-calling pattern at the expense of forgetting the original thinking capability that the backbone was trained on, which prevents it from building upon the backbone's existing reasoning abilities.  On MathVista, DLR obtains 82.7, exceeding Qwen3-VL baseline by 3.5\% and surpassing the strongest open-source baseline LVR by 5.4\%. This result suggests that the proposed DLR mechanism is particularly beneficial for mathematical visual reasoning, where the model must progressively inspect diagrams or figures and accumulate evidence over multiple steps rather than relying on a single-pass latent representation like LVR. On MMMU-Pro, DLR achieves improves over Qwen3-VL baseline by 2.6\%. This shows that DLR generalizes beyond localized perception tasks to challenging multidisciplinary reasoning scenarios, where better premise decomposition and premise-conditioned visual verification both contribute to improved answer accuracy. Results on MMStar show that DLR improves not only improves the complex reasoning tasks but also broader multimodal understanding.

In summary, we have several key findings: (i) \textbf{the text-only baseline is consistently weaker than methods that explicitly incorporate visual evidence in intermediate steps}, confirming the importance of grounded visual verification. In complex tasks, Qwen3-VL baseline sometimes generates very long outputs (over 30k+ tokens) but repeats uncertain self-verification without sufficient visual grounding. (ii) \textbf{"think with images" methods such as PixelReasoner improve over the text-only baseline, but stills trails latent visual reasoning methods like LVR.} It suggests that internal latent grounding can be more efficient than external visual-editing tools. (iii) \textbf{among latent reasoning methods, DLR consistently outperforms LVR}, indicating \textbf{dynamically interleaving multi-step premise-conditioned latent visual grounding is more effective} than constraining the reasoning process to a single, coarse latent embedding.

\subsection{Ablation Study}
\begin{table}[t]
\centering
\resizebox{\linewidth}{!}{
\begin{tabular}{l c c c c}
\toprule
\textbf{Model} & \textbf{V*} & \textbf{MathVista} & \textbf{MMMU-Pro} & \textbf{MMStar} \\
\midrule
DLR-SFT & 80.5 & 65.1 & 54.6 & 64.6 \\
\quad w/o pretraining & 79.5 & 63.0 & 53.6 & 63.2 \\
\midrule
\rowcolor[HTML]{EBF1F8} 
DLR & \textbf{83.8} & \textbf{82.7} & \textbf{63.5} & \textbf{76.2} \\
\quad w/o $R_{focus}$ & 82.6 & 66.5 & 55.1 & 64.8 \\
\quad w/o $\mathcal{J}_{latent}$ & 81.5 & 57.1 & 54.9 & 63.3 \\
\bottomrule
\end{tabular}
}
\caption{Ablations of the components in DLR.}
\label{tab:ablation}
\vspace{-4mm}
\end{table}

\begin{figure*}[t]
\begin{center}
\vspace{-6mm}
\includegraphics[width=\textwidth]{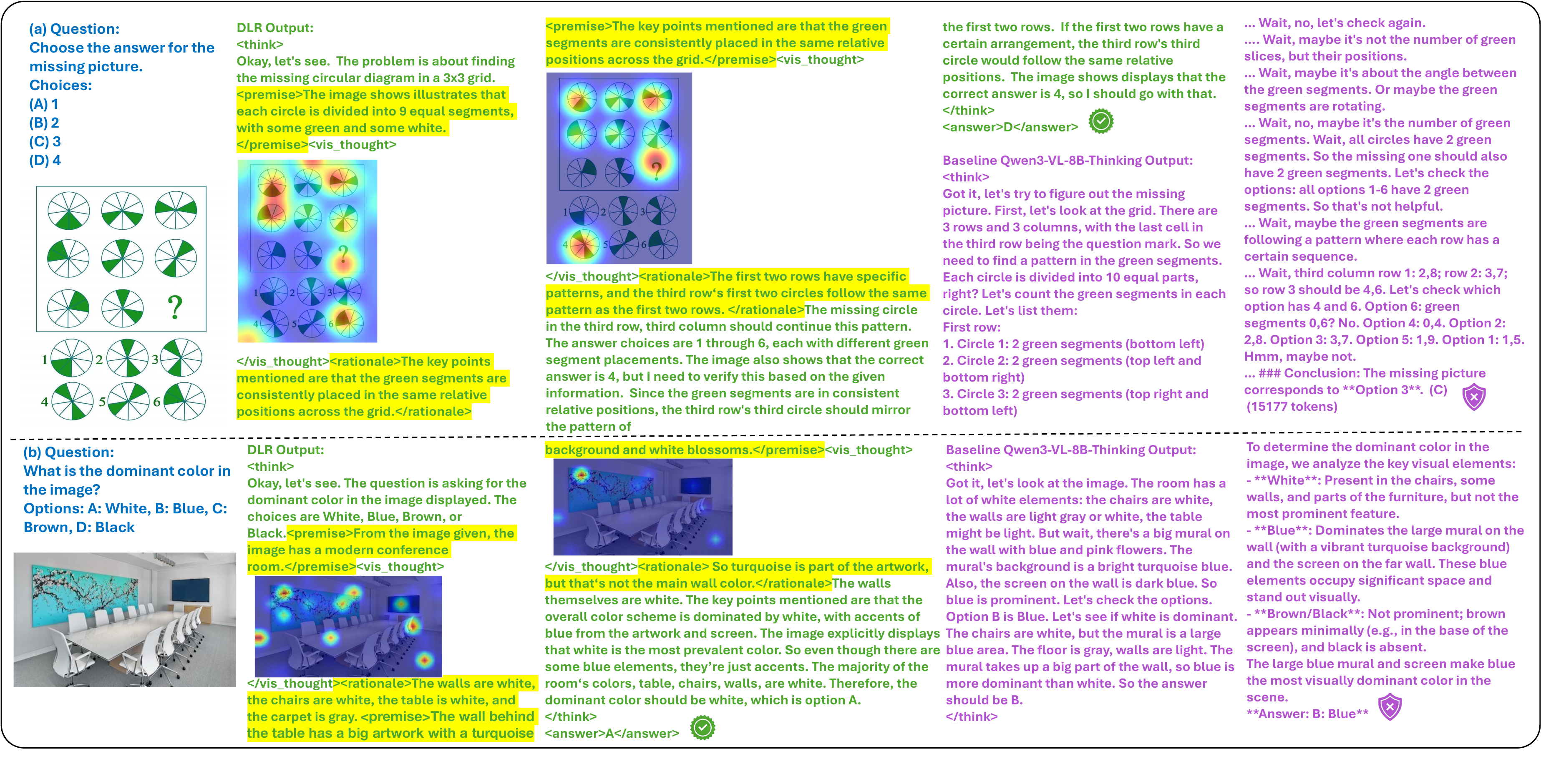}
\caption{Case studies between DLR and baseline Qwen3-VL-8B-Thinking. The premises and rationales in DLR are highlighted in yellow. These examples show that precise decomposed premises and visually grounded rationales lead to correct answers where the baseline fails.}
\label{fig:qualitative}
\end{center}
\vspace{-4mm}
\end{figure*}
\noindent \textbf{Effect of the Pretraining, SFT, and RL.} In Tab.~\ref{tab:ablation}, we observe that removing pretraining consistently hurts performance on all benchmarks. This shows that the pretraining stage provides a better initialization for the visual grounder by establishing foundational cross-modal alignment before structured reasoning is learned during SFT. Without this warm-up stage, the latent visual grounder is less capable of extracting premise-relevant visual evidence. During SFT, it internalizes the structured DLR format. Compared with DLR-SFT, the full DLR achieves a 3.3\% improvement on V*. This shows that the RL stage can break the deterministic limitations of SFT, enabling the model to actively explore visual evidence and to facilitate finding potentially correct reasoning paths.

\noindent \textbf{Effect of the Latent Policy Optimization.} The most critical component of stage III is the latent policy optimization. Removing $\mathcal{J}_{latent}$ in Tab.~\ref{tab:ablation} leads to a drastic performance degradation, particularly on MathVista, where the score drops from 82.7\% to 57.1\%. This confirms that our proposed SGLP is essential for guiding effective exploration in continuous visual space.

\noindent \textbf{Effect of the Focus Reward.}
Removing $R_{focus}$ results in a decline in V* performance by 1.2\%, suggesting that aligning the visual grounder's attention with an oracle grounding signal provides some guidance for stable latent exploration.



\subsection{Quantitative Faithfulness Analysis}

Beyond the qualitative attention visualizations, we further 
quantitatively evaluate whether the visual grounder's attention 
faithfully identifies image regions that are important to the 
model's predictions. Following the comprehensiveness framework~\cite{deyoung2020eraser}, we mask the top image regions 
identified by the visual grounder's attention map and measure the 
resulting accuracy degradation on V* Bench. As a control, we 
randomly mask the same number of image regions.

\begin{table}[t]
\centering
\begin{tabular}{lcc}
\toprule
\textbf{Masking Strategy} & \textbf{V* Accuracy} & \textbf{$\Delta$} \\
\midrule
No mask (DLR)       & 83.8 & --    \\
Random mask         & 77.7 & -6.1  \\
DLR attention mask  & 36.5 & -47.3 \\
\bottomrule
\end{tabular}
\caption{Quantitative faithfulness analysis on V* Bench.}
\label{tab:masking_analysis}
\end{table}

As shown in Tab.~\ref{tab:masking_analysis}, random masking reduces accuracy by only 6.1 percentage points, whereas masking DLR attention regions causes a 47.3-point drop. This large degradation confirms that the visual grounder's attention faithfully captures the image regions critical to the model's predictions.

\subsection{Case Study}
 Both cases in Fig.~\ref{fig:qualitative} highlight \textbf{a common challenge for text-only multimodal reasoning: \emph{The model often generates long but weakly visually grounded reasoning chains.}} In contrast, DLR explicitly decomposes the problem into premise-level subprobelms, retrieves premise-conditioned visual evidence, and then produces grounded rationales before answering. In the first example, the baseline severely \textbf{overthinks} the problem. Although it repeatedly waits to check possible rules for the circular diagrams again and again, it fails to reliably ground these hypotheses in the image and ultimately predicts the wrong answer after generating \textbf{15,177 tokens}. The baseline keeps revising speculative rules such as counting green segments, checking angular relations, and testing row-wise patterns, but still converges to an incorrect option. This example reveals that a text-only reasoning process can become verbose and unstable when the task fundamentally depends on precise visual comparison. By contrast, DLR solves the same problem through a much more structured reasoning process. It first generates a premise that identifies the key visual subproblem---namely, that the green segments are arranged in consistent relative positions across the grid---and then retrieves visual evidence conditioned on this premise. The subsequent rationale is directly grounded in the retrieved evidence, allowing the model to verify the pattern step by step. Rather than forcing the model to solve the entire problem in a single opaque trajectory, DLR follows an explicit \emph{decompose $\rightarrow$ look $\rightarrow$ reason} loop that is better aligned with how humans solve difficult visual tasks. 

 The second example further demonstrates the interpretability of our framework. The text-only baseline overemphasizes the salient mural and incorrectly predicts \emph{blue}, whereas DLR correctly infers that \emph{white} is the dominant color because most of the room elements---including the walls, chairs, and table---are white. Meanwhile, the attention heatmaps in Fig.~\ref{fig:qualitative} are derived from the image cross-attention layer within the visual grounder module. In both examples, the highlighted regions align well with the image areas that are relevant to the current subproblem, indicating that the retrieved latent evidence is tightly guided by the textual premise. This provides a level of \textbf{stepwise interpretability} that is absent in current latent visual reasoning methods.

\section{Conclusion}
We present DLR, a reinforced latent reasoning framework that unifies dynamic textual decomposition and premise-conditioned visual grounding for multi-step multimodal reasoning, making them jointly optimized and mutually beneficial. Extensive experiments demonstrate that DLR consistently improves performance across all benchmarks, while provides stepwise interpretability.

\section*{Limitations}

Our evaluation mainly focuses on vision-centric (i.e., image-centric) reasoning benchmarks. The generality of the framework to broader multimodal settings—such as video reasoning, embodied multimodal decision-making, or tasks requiring richer external interaction—remains to be validated. Extending this work to these more challenging settings is an important direction for our future work.

Another limitation concerns the transparency--efficiency trade-off of
latent reasoning. Compared with explicit tool-based methods, whose
intermediate operations such as crops, zooms, or sketches are directly
inspectable, continuous latent representations are inherently less
human-readable. DLR partially mitigates
this limitation by retaining explicit premises and rationales around each
latent visual reasoning step and exposing premise-conditioned attention
maps. Therefore, developing a more interpretable mechanism for continuous visual thoughts is another important direction for future work.

\bibliography{custom}

\clearpage
\appendix

\section{Prompt for Constructing SFT Data}
\label{app:sft_prompt}

\begin{figure}[htbp]
\begin{tcolorbox}[
  colback=gray!12,
  colframe=gray!60,
  boxrule=0.6pt
]

<image>\textbackslash nThe question is “\hl{\{question\}}” and the reasoning process is “\hl{\{reasoning\}}”. Within the reasoning process, focus on the content between \taghl{<think>} and \taghl{</think>} and perform the following annotation: Identify every major argument that requires looking at the image and is immediately followed by sentence(s) that describe the specific details obtained from observing the image, which serve as the crucial rationale for deriving the answer.

\bigskip

Wrap each such major argument with the tags \taghl{<premise>} and \taghl{</premise>}. Only tag the arguments that require looking at the image to get the following details. Wrap the corresponding supporting details with \taghl{<rationale>} and \taghl{</rationale>}. For example: <premise>The image shows white suitcases stacked.</premise><rationale>On the top suitcase in the foreground, there's a wide-brimmed hat with a black band and a red fabric draped over it.</rationale> or <premise>There are two parallel lines, m and n, cut by a transversal.</premise><rationale>Angle 1 is above line m on the right side of the transversal, and angle 2 is below line n on the left side of the transversal.</rationale>

\bigskip
The final output should be the original reasoning with \taghl{<premise>} and \taghl{<rationale>} tags inserted in place. Keep all original text unchanged except for inserting the tags.

\end{tcolorbox}
\caption{The Prompt Template for Constructing the Reasoning Process for SFT.}
\label{fig:query_prompt}
\end{figure}

\section{Gradient Derivation for Latent Policy Optimization}
\label{app:gradient}
We formally derive the gradient update rule for the latent visual grounder, parameterized by $\phi$, to demonstrate how the Spherical Gaussian Latent Policy (SGLP) explores the semantic space.

Let $\mathbf{z}^{(t)} \in \mathbb{S}^{d-1}$ be the continuous latent embedding generated at step $t$ conditioned on state $s_t$. From the Policy Gradient Theorem, the gradient of the expected return objective $\mathcal{J}_{latent}(\phi)$ with respect to the visual grounder parameters $\phi$ is given by:
\begin{equation*}
    \nabla_\phi \mathcal{J}_{latent}(\phi) = \mathbb{E} \left[ \sum_{t} A \nabla_\phi \log \pi_\phi(\mathbf{z}^{(t)} | s_t) \right],
\end{equation*}
For our SGLP, both the sampled latent $\mathbf{z}^{(t)}$ and the predicted mean direction $\boldsymbol{\mu}_\phi(s_t)$ are strictly $L_2$-normalized. As a result, the squared Euclidean distance mathematically simplifies into a scaled cosine similarity. The log-probability of the policy is therefore proportional to the dot product:
$$ \log \pi_\phi(\mathbf{z}^{(t)} | s_t) \approx \frac{1}{\sigma^2} {\mathbf{z}^{(t)}}^\top \boldsymbol{\mu}_\phi(s_t) + C. $$
Taking the derivative of this log-probability with respect to $\phi$ yields:
\begin{equation*}
\begin{aligned}
 \nabla_\phi \log \pi_\phi(\mathbf{z}^{(t)} | s_t) & \approx \frac{1}{\sigma^2} \nabla_\phi \left( {\mathbf{z}^{(t)}}^\top \boldsymbol{\mu}_\phi(s_t) \right) \\ &= \frac{1}{\sigma^2} {\mathbf{z}^{(t)}}^\top \nabla_\phi \boldsymbol{\mu}_\phi(s_t). 
 \end{aligned}
 \end{equation*}
Substituting this back into the policy gradient formulation, the parameter update rule governed by the GRPO advantage $A$ becomes:
$$ \phi_{new} \leftarrow \phi_{old} + \alpha \mathbb{E} \left[ \sum_{t} \frac{A}{\sigma^2} {\mathbf{z}^{(t)}}^\top \nabla_\phi \boldsymbol{\mu}_\phi(s_t) \right]. $$

Some Takeaways on the Manifold:
\begin{itemize}
    \item Directional Alignment (${\mathbf{z}^{(t)}}^\top \nabla_\phi \boldsymbol{\mu}_\phi$): Because both vectors reside on the unit hypersphere, the gradient guides the predicted mean vector $\boldsymbol{\mu}_\phi$ along the surface of the manifold without altering its magnitude.
    \item Advantage Weighting ($A$): The group-relative advantage $A$ acts as a directional scalar. If $A > 0$ (the better latent $\mathbf{z}^{(t)}$ focused on the right visual region and represents better extracted premise-aligned representation, which led to a correct rationale and answer), the gradient pulls the mean direction $\boldsymbol{\mu}_\phi$ towards the explored $\mathbf{z}^{(t)}$. Conversely, if $A < 0$, it pushes $\boldsymbol{\mu}_\phi$ away.
\end{itemize}

\section{Implementation Details}
\label{app:implementation}
All trainings use the AdamW optimizer with a weight decay of 0.01 and a cosine learning rate scheduler with a 10\% linear warmup.

\textbf{Stage I: Pretraining.} In the first stage, the base VLM is completely frozen. We train only the visual grounder on 443,757 visual question-answering pairs in the training set of VQAv2 dataset~\cite{goyal2017making} using the symmetric InfoNCE loss. We set the contrastive temperature $\tau=0.07$ and train 3 epochs with a learning rate of $1 \times 10^{-4}$ and batch size of 16.

\textbf{Stage II: SFT.} We train the model on our custom annotated Vision-R1-cold dataset (200k)~\cite{huang2025vision} to internalize the DLR format. During SFT, the visual encoder and the base LLM weights remain frozen. We apply LoRA fine-tuning with rank $r = 64$, $\alpha = 128$, and a dropout rate of $0.05$. We train for 2 epochs with a learning rate of $5 \times 10^{-5}$, effective batch size of 16, and maximum tokens of 4096. 

\textbf{Stage III: Latent Policy Optimization.} For the reinforcement finetuning stage, we implement our SGLP framework on the ViRL dataset (39k)~\cite{wang2025vl}. The policy samples a group of $G=4$ trajectories per query. The VLM learning rate is $1 \times 10^{-6}$ with sampling temperature set to $1.0$, effective batch size of 8, and maximum tokens of 2048 for 1 epoch. The visual grounder learning rate is $1 \times 10^{-5}$.

\section{Computational Overhead.}
DLR introduces only a lightweight visual grounder of approximately
7M parameters, corresponding to less than 0.09\% of the 8B VLM
backbone. During Stages II and III, the grounder is jointly optimized
with the VLM and therefore introduces only a small additional
parameter overhead. The only additional training stage compared with
a standard SFT+RL pipeline is Stage I pretraining, where the entire
VLM backbone is frozen and only the visual grounder is optimized.
Moreover, the relative parameter overhead decreases with backbone
scale, accounting for less than 0.01\% when scaling to 70B+ models.

\section{LLM Usage Disclosure}
We use large language models to correct the grammar and improve the clarity of writing in this paper.
\end{document}